\begin{document}
\title{Homomorphisms Between Transfer, Multi-Task, and Meta-Learning Systems}
%
%
\author{Tyler Cody}
\authorrunning{T. Cody}
%
\institute{National Security Institute \\ Virginia Tech \\ Arlington, VA, USA \\
\email{tcody@vt.edu}}
\maketitle              
\begin{abstract}

Transfer learning, multi-task learning, and meta-learning are well-studied topics concerned with the generalization of knowledge across learning tasks and are closely related to general intelligence. But, the formal, general systems differences between them are underexplored in the literature. This lack of systems-level formalism leads to difficulties in coordinating related, inter-disciplinary engineering efforts. This manuscript formalizes transfer learning, multi-task learning, and meta-learning as abstract learning systems, consistent with the formal-minimalist abstract systems theory of Mesarovic and Takahara. Moreover, it uses the presented formalism to relate the three concepts of learning in terms of composition, hierarchy, and structural homomorphism. Findings are readily depicted in terms of input-output systems, highlighting the ease of delineating formal, general systems differences between transfer, multi-task, and meta-learning.

\keywords{Abstract Learning Systems \and Transfer Learning  \and Multi-Task Learning \and Meta-Learning \and Abstract Systems Theory}
\end{abstract}
%
%
%



\section{Introduction}

Transfer learning, multi-task learning, and meta-learning are three different concepts of learning that aim to generalize knowledge across learning tasks. As such, they are common topics in artificial general intelligence \cite{al2018continuous, sheikhlar2020autonomous}. They are informally described as similar in their respective, prominent surveys \cite{pan2009survey, zhang2021survey, vanschoren2018meta}. Formally, however, the general systems character of this similarity is left undiscussed. Likely, this is because the formalism of their respective learning algorithms quickly represents their differences. While this gap may seem inconsequential to algorithm designers, who typically work very closely to solution methods, to systems engineers, this gap muddles basic questions about composition and hierarchy.   

In this manuscript, a recently proposed abstract systems theory (AST) model of learning \cite{cody2021mesarovician, cody2021systems} is used to formally relate transfer, multi-task, and meta-learning. Each concept of learning is modeled as an abstract system \cite{mesarovic1989abstract}, i.e., as a relation on component sets, and their structural homomorphism is studied. 
The presented results extend previous work that synthesizes AST with statistical learning theory \cite{cody2021mesarovician} and transfer learning \cite{cody2019systems, cody2021systems} with novel definitions of multi-task and meta-learning as abstract systems, and with an investigation of their structural similarities.

This manuscript is structured as follows. First, preliminaries on abstract learning systems and transfer learning systems are given in Sections \ref{sec:als} and \ref{sec:tls}. Subsequently, multi-task learning and meta-learning are formalized as systems from their informal descriptions in Section \ref{sec:mtlml}, and the homomorphism between transfer, multi-task, and meta-learning is investigated in Section \ref{sec:comp}. The manuscript concludes with a synopsis and remarks on the pitfalls of the existing informal taxonomy in light of the presented material.



\section{Abstract Learning Systems}
\label{sec:als}

Abstract systems $S$ are relations on (non-empty) abstract sets $$S \subset \times \{V_i|i=1, ..., I\},$$ where $\times$ is the Cartesian product, $V_i$ are (component) sets, and $\overline{S} = \{V_i|i=1,...,I\}$ \cite{mesarovic1989abstract}. Input-output systems are (elementary) systems $$S \subset \times \{\mathcal{X}, \mathcal{Y}\},$$ where $\mathcal{X} \cap \mathcal{Y} = \varnothing$, $\mathcal{X} \cup \mathcal{Y} = \overline{S}$, and $\varnothing$ is the empty set. The set $\mathcal{X}$ is termed the input and the set $\mathcal{Y}$ is termed the output. Functional systems are input-output systems of the form $S: \mathcal{X} \to \mathcal{Y}$. AST is primarily concerned with input-output systems, with their composition, and with categories of systems \cite{mesarovic1989abstract}.

Recent work presented a stratified model of abstract learning systems as a cascade connection of learning algorithms $A:D \to \Theta$ and hypotheses $H: \Theta \times \mathcal{X} \to \mathcal{Y}$ where $D$ are data and $\Theta$ are parameters \cite{cody2021mesarovician}. This follows the treatment of learning as function approximation \cite{wang2016different}. Learning systems are defined as follows.
\begin{definition}[Learning Systems.] \\
    A learning system $S$ is a relation
    $$S \subset \times \{A, D, \Theta, H, \mathcal{X}, \mathcal{Y} \}$$
    such that
    \begin{gather*}
        D \subset \mathcal{X} \times \mathcal{Y}, A:D \to \Theta, H:\Theta \times \mathcal{X} \to \mathcal{Y} \\
        (d, x, y) \in \mathcal{P}(S) \leftrightarrow (\exists \theta) [(\theta, x, y) \in H \wedge (d, \theta) \in A]
    \end{gather*}
    where 
    $$x \in \mathcal{X}, y \in \mathcal{Y}, d \in D, \theta \in \Theta.$$ 
    The algorithm $A$, data $D$, parameters $\Theta$, hypotheses $H$, input $\mathcal{X}$, and output $\mathcal{Y}$ are the component sets of $S$, $\mathcal{P}$ is the power set, and learning is specified in the relation among them.
    \label{def:ls}
\end{definition}
This AST model of learning is depicted in Figure \ref{fig:learning-systems} at the elementary (input-output) and cascade levels of abstraction (as presented in \cite{cody2021mesarovician}).

\begin{figure*}[t]
    \centering
    \includegraphics[width=.45\textwidth]{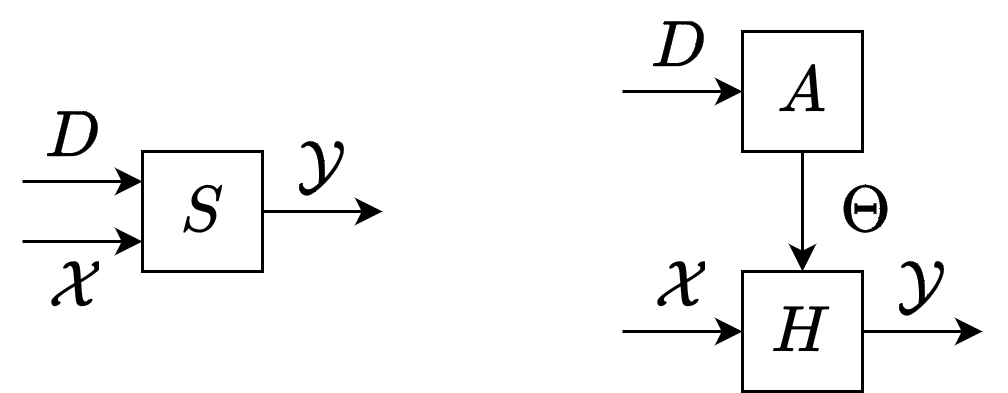}
    \caption{Learning systems at the elementary (left) and cascade (right) levels of abstraction \cite{cody2021mesarovician}.}
    \label{fig:learning-systems}
\end{figure*}

\section{Transfer Learning Systems}
\label{sec:tls}

The concept of learning tasks is widely used in artificial intelligence \cite{thorisson2016artificial}. Transfer learning is conventionally defined in terms of domains $\mathcal{D} = \{\mathcal{X}, P(X)\}$ and tasks $\mathcal{T} = \{\mathcal{Y}, P(Y|X)\}$, where $P$ denotes a probability measure. Given a source domain $\mathcal{D}_S$ and learning task $\mathcal{T}_S$, a target domain $\mathcal{D}_T$ and learning task $\mathcal{T}_T$ , Pan and Yang define transfer learning as a learning paradigm that \cite{pan2009survey},
\begin{quote}
    ``aims to help improve the learning of the target predictive function $f_T$\footnote{$f_T \sim P(Y_T|X_T)$} in $\mathcal{D}_T$ using the knowledge in $\mathcal{D}_S$ and $\mathcal{D}_T$, where $\mathcal{D}_S \neq \mathcal{D}_T$ or $\mathcal{T}_S \neq \mathcal{T}_T$.''
\end{quote}
Alternatively, previous work describes transfer learning as \cite{cody2021systems}, 
\begin{quote}
``...a relation on the source and target (learning) systems that combines knowledge from the source with data from the target and uses the result to select a hypothesis that estimates the target learning task.''
\end{quote}
Transfer learning systems are defined as follows.
\begin{definition}[Transfer Learning System.] \\
    Given source and target learning systems $S_S$ and $S_T$
    \begin{gather*}
        S_S \subset \times \{ A_S, D_S, \Theta_S, H_S, X_S, Y_S \} \\
        S_T \subset \times \{ A_T, D_T, \Theta_T, H_T, X_T, Y_T \} 
    \end{gather*}
    a transfer learning system $S_{Tr}$ is a relation on the component sets of the source and target systems $S_{Tr} \subset \overline{S_S} \times \overline{S_T}$ such that
    $$K_S \subset D_S \times \Theta_S, D \subset D_T \times K_S$$
    and
    \begin{gather*}
        A_{Tr}: D \to \Theta_{Tr}, H_{Tr}: \Theta_{Tr} \times X_T \to Y_T \\
        (d, x_T, y_T) \in \mathcal{P}(S_{Tr}) \leftrightarrow \\
        (\exists \theta_{Tr})[(\theta_{Tr}, x_T, y_T) \in H_{Tr} \land (d, \theta_{Tr}) \in A_{Tr}]
    \end{gather*}
    where
    $$x_T \in X_T, y_T \in Y_T, d \in D, \theta_{Tr} \in \Theta_{Tr}.$$
    The nature of source knowledge $K_S$\footnote{Here transferred knowledge $K_S$ is defined as $D_S$ and $\Theta_S$, the source data and parameters, following convention \cite{pan2009survey}.}, the transfer learning algorithm $A_{Tr}$, hypotheses $H_{Tr}$, and parameters $\Theta_{Tr}$ specify transfer learning as a relation on $\overline{S_S}$ and $\overline{S_T}$. 
    \label{def:tl}
\end{definition}
This AST model of transfer learning is depicted in Figure \ref{fig:transfer-learning-systems}. Previous work extensively elaborates on and beyond Definition \ref{def:tl} \cite{cody2021systems}.

\begin{figure*}[t]
    \centering
    \includegraphics[width=.8\textwidth]{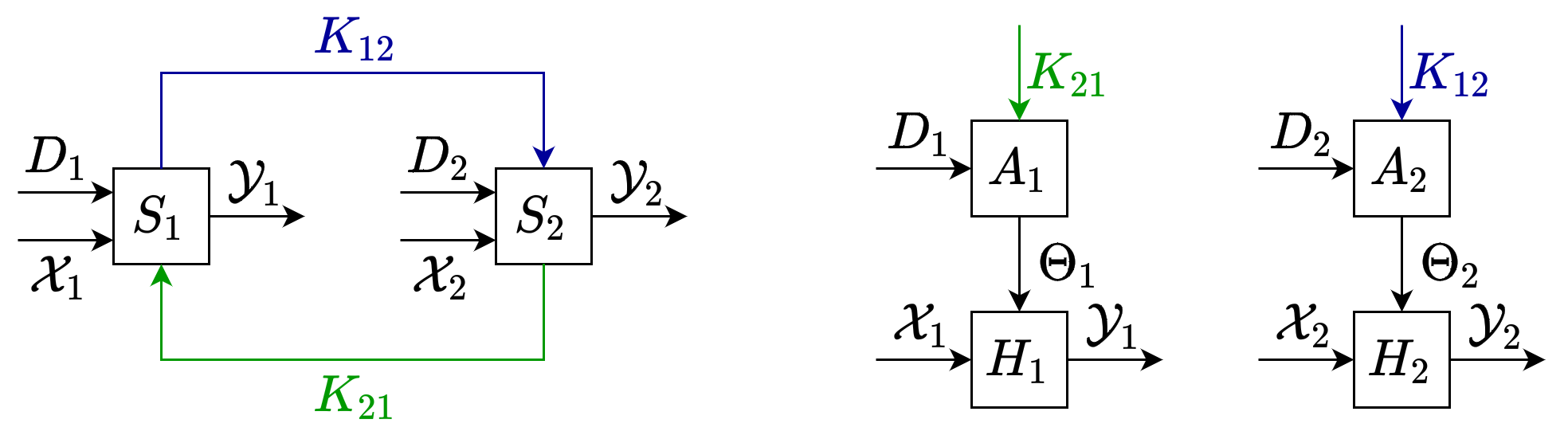}
    \caption{Two learning systems transferring knowledge to each other depicted at the elementary (left) and cascade (middle, right) levels of abstraction \cite{cody2021systems}.}
    \label{fig:transfer-learning-systems}
\end{figure*}

\section{Multi-Task and Meta-Learning Systems}
\label{sec:mtlml}

\subsection{Multi-Task Learning}

Zhang and Yang define multi-task learning as \cite{zhang2021survey},
\begin{quote}
    ``a learning paradigm in machine learning and its aim is to leverage useful information contained in multiple related tasks to help improve the generalization performance of all the tasks.'' 
\end{quote}
Multi-task learning systems are defined herein as follows.
\begin{definition}[Multi-Task Learning Systems.] \\
    Given $N$ learning systems $S_1, ..., S_N$, a multi-task learning system is a learning system $S \subset \times \{A, D, \Theta, H, \mathcal{X}, \mathcal{Y}\}$ where,
    \begin{gather*}
        D = (D_1, ..., D_N), 
        H = (H_1, ..., H_N), \\
        \Theta = (\Theta_1, ..., \Theta_N), 
        \mathcal{X} = (\mathcal{X}_1, ..., \mathcal{X}_N), \\
        \mathcal{Y} = (\mathcal{Y}_1, ..., \mathcal{Y}_N),
    \end{gather*}
    i.e., $A: (D_1, ..., D_N) \to (\Theta_1, ..., \Theta_N)$.
\end{definition}
Multi-task learning systems are simply learning systems that jointly learn multiple, distinct hypotheses. Multi-task learning systems are depicted in Figure \ref{fig:multi-and-meta-learning-systems}A. A trivial multi-task learning system can be defined as follows.
\begin{definition}[Trivial Multi-Task Learning Systems.] \\
    Given $N$ learning systems $S_1, ..., S_N$, a trivial multi-task learning system is a multi-task learning system $S \subset \times \{A, D, H, \Theta, \mathcal{X}, \mathcal{Y}\}$ defined over $S_1, ..., S_N$ where $A = (A_1, ..., A_N)$. 
\end{definition}
In other words, the trivial case of multi-task learning is a superficial grouping of algorithms $(A_1, ..., A_N)$ where $A$ simply uses $D_1, ..., D_N$ as input to each respective algorithm $A_n$ for $n \in N$. A non-trivial multi-task learning system can be defined as follows.
\begin{definition}[Non-Trivial Multi-Task Learning Systems.] \\
    Given $N$ learning systems $S_1, ..., S_N$, a non-trivial multi-task learning system is a multi-task learning system $S \subset \times \{A, D, H, \Theta, \mathcal{X}, \mathcal{Y}\}$ defined over $S_1, ..., S_N$ where $A \neq (A_1, ..., A_N)$.
\end{definition}
\begin{figure*}[t]
    \centering
    \includegraphics[width=.6\textwidth]{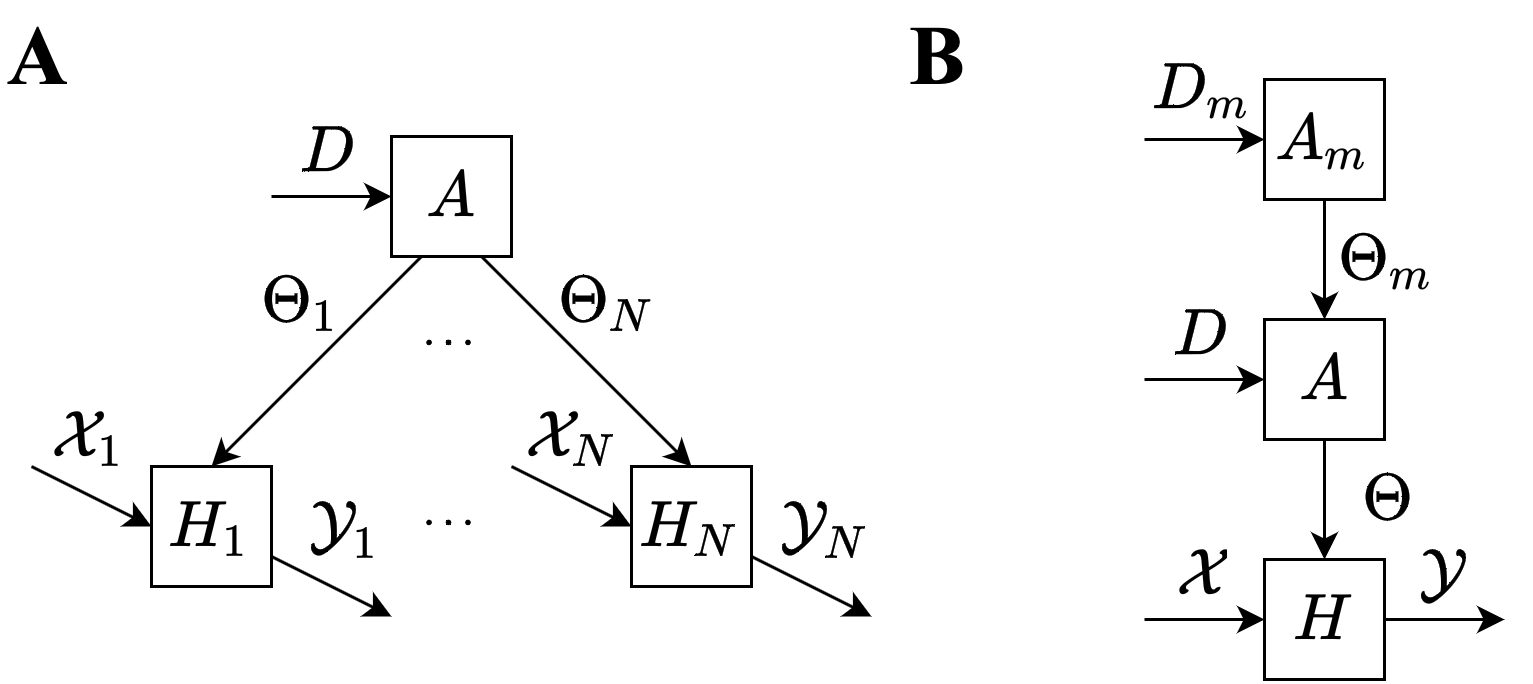}
    \caption{A multi-task learning system (A) and a meta-learning system (B).}
    \label{fig:multi-and-meta-learning-systems}
\end{figure*}

\subsection{Meta-Learning}

Vanschoren defines meta-learning as \cite{vanschoren2018meta}
\begin{quote}
    ``the science of systematically observing how different machine learning approaches perform on a wide range of learning tasks, and then learning from this experience, or meta-data, to learn new tasks much faster than otherwise possible.''
\end{quote}
Meta-learning systems are defined herein as follows.
\begin{definition}[Meta-Learning System.] \\
    Meta-learning systems are learning systems $S \subset \times \{A_m, \Theta_m, D_m, H_m, \mathcal{X}_m, \mathcal{Y}_m\}$ with hypotheses $H_m$ that are algorithms $A$, inputs $\mathcal{X}_m $ that are data $D$, outputs $\mathcal{Y}_m$ that are parameters $\Theta$ for hypotheses $H:\Theta \times \mathcal{X} \to \mathcal{Y}$, and where $S \subset \times \{A, D, \Theta, H, \mathcal{X}, \mathcal{Y}\}$ is a learning system.
\end{definition}
Meta-learning systems are learning systems whose hypotheses are learning algorithms. Meta-learning systems are depicted in Figure \ref{fig:multi-and-meta-learning-systems}B.

\section{Homomorphisms Between Learning Systems}
\label{sec:comp}


Similarity of systems is a fundamental notion. Structural similarity describes the \emph{homomorphism} between two systems' structures. In accord with category theory, a map from one system to another is termed a morphism. Homomorphism specifies the morphism to be onto. Homomorphism is formally defined as follows.
\begin{definition}{\emph{Homomorphism}.} \\
    An input-output system $S_1 \subset \times \{\mathcal{X}_1 \times \mathcal{Y}_1\}$ is homomorphic to $S_2 \subset \times \{\mathcal{X}_2, \mathcal{Y}_2\}$ if there exists a pair of maps,
    \begin{align*}
        \varrho:\mathcal{X}_1 \to \mathcal{X}_2, \vartheta:\mathcal{Y}_1 \to \mathcal{Y}_2
    \end{align*}
    such that for all $x_1\in \mathcal{X}_1$, $x_2\in \mathcal{X}_2$, and $y_1\in \mathcal{Y}_1$, $y_2\in \mathcal{Y}_2$, $\varrho(x_1)=x_2$ and $\vartheta(y_1)=y_2$.
\end{definition}

Let a two-way transfer learning system be a pair of transfer learning systems that both transfer knowledge to each other. In the following, it is proven that two transfer learning systems sharing knowledge with each other are homomorphic to a non-trivial multi-task learning system, as depicted in Figure \ref{fig:transfer-to-multi}.

\begin{theorem}
    Two-way transfer learning systems are homomorphic to a non-trivial multi-task learning system.
\end{theorem}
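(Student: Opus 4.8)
The plan is to exhibit explicit maps $\varrho$ and $\vartheta$ from the input and output sets of a two-way transfer learning system onto the input and output sets of a suitably constructed non-trivial multi-task learning system, and to verify that the homomorphism conditions of the definition hold. First I would fix notation: let $S_{Tr}^{(1)}$ and $S_{Tr}^{(2)}$ be the two transfer learning systems, where $S_{Tr}^{(1)}$ transfers knowledge $K_1 \subset D_1 \times \Theta_1$ from system $S_1$ into the learning of $S_2$, and $S_{Tr}^{(2)}$ transfers $K_2 \subset D_2 \times \Theta_2$ from $S_2$ into the learning of $S_1$. Each, by Definition \ref{def:tl}, is an input-output system whose input is a composite datum $d \in D$ (target data paired with source knowledge) and whose output is a target input-output pair. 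The two-way system is then the pair, with combined input set $\mathcal{X}_1 = (D_T^{(2)} \times K_1) \times (D_T^{(1)} \times K_2) \times X_1 \times X_2$ and combined output $\mathcal{Y}_1 = Y_1 \times Y_2$ (reading the transfer systems at the elementary level of Figure \ref{fig:transfer-learning-systems}).

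Next I would construct the target multi-task learning system over the two base learning systems $S_1, S_2$: take $N = 2$, so $S \subset \times\{A, D, \Theta, H, \mathcal{X}, \mathcal{Y}\}$ with $D = (D_1, D_2)$, $\Theta = (\Theta_1, \Theta_2)$, $H = (H_1, H_2)$, $\mathcal{X} = (X_1, X_2)$, $\mathcal{Y} = (Y_1, Y_2)$, and crucially $A : (D_1, D_2) \to (\Theta_1, \Theta_2)$ with $A \neq (A_1, A_2)$ — the non-triviality coming precisely from the fact that in the two-way transfer case each parameter output depends on \emph{both} datasets (through the cross-transferred knowledge), so $A$ does not split as a product of the original per-task algorithms. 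The multi-task input set is $\mathcal{X}_2 = X_1 \times X_2 \times (D_1 \times D_2)$ and output set $\mathcal{Y}_2 = Y_1 \times Y_2$. Then I would define $\varrho$ to send the composite transfer input to the multi-task input by projecting the composite data $D_T^{(i)} \times K_j$ onto its $D_T$-components (which lie in $D_1, D_2$) and carrying the $X_1, X_2$ coordinates identically, and define $\vartheta$ to be the identity on $Y_1 \times Y_2$. Surjectivity of $\varrho$ is the one point needing a small argument: it follows because every target dataset arises as the $D_T$-projection of some composite datum (for any $d_1 \in D_1$ there is some $k_2 \in K_2$ with $(d_1, k_2)$ admissible), and $\vartheta$ is trivially onto.

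The conditions $\varrho(x_1) = x_2$ and $\vartheta(y_1) = y_2$ in the homomorphism definition are then immediate once $x_2$ and $y_2$ are \emph{taken to be} the images; the substantive content is that these maps are well-defined and onto, and that the image system is genuinely a non-trivial multi-task learning system rather than a trivial one. I expect the main obstacle to be this last verification — showing $A \neq (A_1, A_2)$ in a way that is forced by the two-way transfer structure rather than merely permitted. The argument I would give is that $A$ must satisfy, for each task $n$, a relation of the form $(d, \theta_n) \in A$ iff $(d_n \oplus k_{\bar n}, \theta_n) \in A_{Tr}^{(n)}$ where $k_{\bar n}$ is knowledge extracted from the \emph{other} task's data via that task's learning; hence the $\theta_1$-output depends on $d_2$ and the $\theta_2$-output on $d_1$, so $A$ cannot factor as the product $(A_1, A_2)$ whose $n$-th component depends on $d_n$ alone — establishing non-triviality and completing the homomorphism claim. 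A secondary, purely bookkeeping obstacle is keeping the two levels of abstraction (elementary vs.\ cascade) consistent, since the homomorphism definition is stated for elementary input-output systems; I would resolve this by collapsing both the two-way transfer system and the multi-task system to their elementary forms before defining $\varrho, \vartheta$, as Figures \ref{fig:transfer-learning-systems} and \ref{fig:multi-and-meta-learning-systems} suggest.
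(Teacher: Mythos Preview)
Your proposal is correct but takes a more elaborate route than the paper. The paper constructs the multi-task system directly over the two \emph{transfer} systems $S_1, S_2$ themselves---setting $A = (A_1, A_2)$, $D = (D_1, D_2)$, $\Theta = (\Theta_1, \Theta_2)$, $H = (H_1, H_2)$, $\mathcal{X} = (\mathcal{X}_1, \mathcal{X}_2)$, $\mathcal{Y} = (\mathcal{Y}_1, \mathcal{Y}_2)$, where each component is literally the tuple of the corresponding transfer-system components---so the homomorphism is exhibited by the identity on every component set, and no surjectivity argument is needed. Non-triviality is then argued not with respect to $S_1, S_2$ but with respect to the \emph{original} (pre-transfer) systems $S_1', S_2'$: since the transfer algorithms $A_1, A_2$ have different supports than the original $A_1', A_2'$, one has $A = (A_1, A_2) \neq (A_1', A_2')$. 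Your construction instead builds the multi-task system over the original base systems, so you need a genuine projection $\varrho$ that strips the source-knowledge coordinate from the composite datum; this buys you a non-triviality argument that lives entirely inside the system you actually defined (your $A$ cannot factor as a product because each parameter output depends on both datasets), at the cost of having to check surjectivity of a non-identity map and of doing more bookkeeping at the elementary level. The paper's route is shorter; yours is more careful about which learning systems the multi-task structure is ``defined over'' and gives a cleaner, self-contained justification of non-triviality.
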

\begin{proof} 
    Consider two learning systems $S_1'$ and $S_2'$,
    \begin{gather*}
        S_1' \subset \times \{A_1', D_1', \Theta_1', H_1', \mathcal{X}_1, \mathcal{Y}_1\}, \\
        S_2' \subset \times \{A_2', D_2', \Theta_2', H_2', \mathcal{X}_2, \mathcal{Y}_2\}.
    \end{gather*} Let transfer learning be used to transfer knowledge $K_{12} \subset D_1'$ from $S_1'$ to $S_2'$ and knowledge $K_{21} \subset D_2'$ from $S_2'$ to $S_1'$. This creates two transfer learning systems, termed $S_1$ and $S_2$, respectively, 
    \begin{gather*}
        S_1 \subset \times \{A_1, D_1, \Theta_1, H_1, \mathcal{X}_1, \mathcal{Y}_1\}, \\
        S_2 \subset \times \{A_2, D_2, \Theta_2, H_2, \mathcal{X}_2, \mathcal{Y}_2\}. \\
    \end{gather*} where $D_1 \subset D_1' \times D_2'$ and $D_2 \subset D_1' \times D_2'$, as in Figure \ref{fig:transfer-learning-systems}. Consider a multi-task learning system 
    \begin{gather*}
        S \subset \times \{A, D, \Theta, H, \mathcal{X}, \mathcal{Y}\}
    \end{gather*} 
    such that $A = (A_1, A_2)$, $D = (D_1, D_2)$, $\Theta = (\Theta_1, \Theta_2)$, $H = (H_1, H_2)$, $\mathcal{X} = (\mathcal{X}_1, \mathcal{X}_2)$, and $\mathcal{Y} = (\mathcal{Y}_1, \mathcal{Y}_2)$. Clearly, by the identity, $A$, $D$, $\Theta$, $H$, $\mathcal{X}$, and $\mathcal{Y}$ are homomorphic to $(A_1, A_2)$, $(D_1, D_2)$, $(\Theta_1, \Theta_2)$, $(H_1, H_2)$, $(\mathcal{X}_1, \mathcal{X}_2)$, and $(\mathcal{Y}_1, \mathcal{Y}_2)$, respectively. 
    Thus, there exists a set of onto maps $\{\varrho_A, \varrho_D, \varrho_{\Theta}, \varrho_{H}, \varrho_{\mathcal{X}}, \varrho_{\mathcal{Y}}\}$ from $(S_1, S_2) \to S$. Thus, the two-way transfer learning system $(S_1, S_2)$ is homomorphic to the multi-task learning system $S$. Since $A = (A_1, A_2)$ and since $A_1' \neq A_1$ and $A_2' \neq A_2$ (they have different supports), $S$ is therefore necessarily a non-trivial multi-task learning system with respect to $S_1'$ and $S_2'$. $\square$
\end{proof}

\begin{figure*}[t]
    \centering
    \includegraphics[width=.9\textwidth]{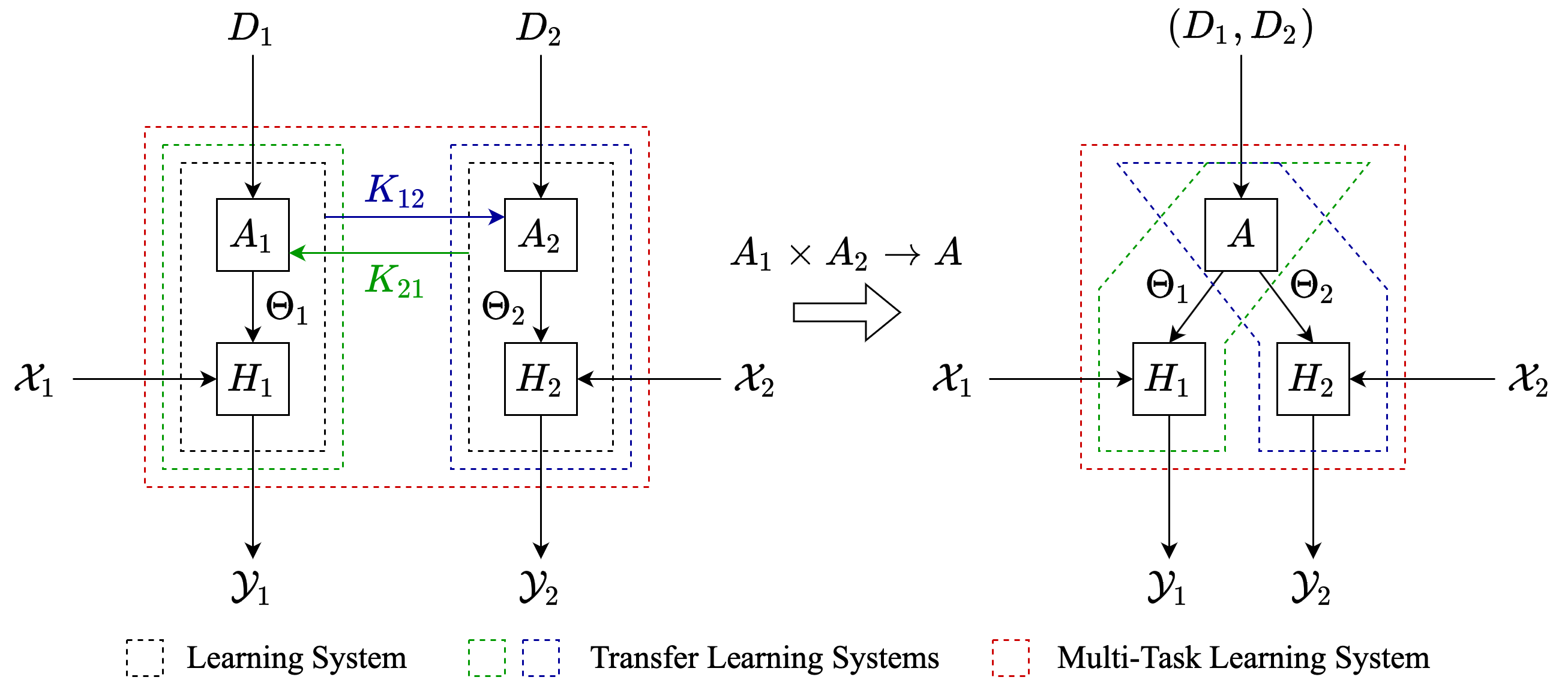}
    \caption{A set of learning systems all transferring knowledge to each other implicitly forms a multi-task learning system.}
    \label{fig:transfer-to-multi}
\end{figure*}

The above hints that multi-task learning systems are related to parallel connections of learning systems. To see this, first define a parallel connection as follows.
\begin{definition} [Parallel Connections] \\
    A parallel connection of systems $S_1: \mathcal{X}_1 \to \mathcal{Y}_1$ and $S_2: \mathcal{X}_2 \to \mathcal{Y}_2$ is an operator $\Vert: \overline{S_1} \times \overline{S_2} \to \overline{S_2}$ such that $S_2: (\mathcal{X}_1 \times \mathcal{X}_2) \to (\mathcal{Y}_1 \times \mathcal{Y}_2)$ and 
    \begin{gather*}
        ((x_1, x_2), (y_1, y_2)) \in S_2 \leftrightarrow \\
        ((x_1, y_1)) \in S_1 \wedge ((x_2, y_2) \in S_2).
    \end{gather*}
\end{definition}
\begin{theorem}
    Trivial multi-task systems are a parallel connection of learning systems.
\end{theorem}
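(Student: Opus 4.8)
The plan is to show that the defining relation of a trivial multi-task learning system factors exactly as the relation obtained by iterating the parallel-connection operator on the constituent systems $S_1, \ldots, S_N$. First I would unpack the definition of the trivial multi-task system: since $A = (A_1, \ldots, A_N)$ and $H = (H_1, \ldots, H_N)$, both maps act componentwise, $A(d_1, \ldots, d_N) = (A_1(d_1), \ldots, A_N(d_N))$ and likewise for $H$. Substituting this into the learning-system relation of Definition \ref{def:ls}, the existential witness splits as $\theta = (\theta_1, \ldots, \theta_N)$, and the biconditional $(d, x, y) \in \mathcal{P}(S) \leftrightarrow (\exists \theta)[(\theta, x, y) \in H \wedge (d, \theta) \in A]$ decomposes into the conjunction over $n$ of $(\theta_n, x_n, y_n) \in H_n \wedge (d_n, \theta_n) \in A_n$, i.e., into $\bigwedge_n (x_n, y_n) \in \mathcal{P}(S_n)$.

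Next I would reduce the $N$-ary claim to the binary parallel-connection operator by induction on $N$. For the base case $N = 2$, the componentwise factorization above is precisely the biconditional in the definition of $\Vert$, namely $((x_1, x_2), (y_1, y_2)) \in S \leftrightarrow (x_1, y_1) \in \mathcal{P}(S_1) \wedge (x_2, y_2) \in \mathcal{P}(S_2)$, while $\mathcal{X} = (\mathcal{X}_1, \mathcal{X}_2)$ and $\mathcal{Y} = (\mathcal{Y}_1, \mathcal{Y}_2)$ supply the required typing $S: (\mathcal{X}_1 \times \mathcal{X}_2) \to (\mathcal{Y}_1 \times \mathcal{Y}_2)$; hence $S = S_1 \Vert S_2$ at the input-output level. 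For the inductive step I would note that $S_1 \Vert \cdots \Vert S_{N-1}$ is again a learning system whose algorithm and hypothesis are the tuples $(A_1, \ldots, A_{N-1})$ and $(H_1, \ldots, H_{N-1})$ — that is, it is itself a trivial multi-task system over $S_1, \ldots, S_{N-1}$ — so applying $\Vert$ with $S_N$ yields the trivial multi-task system over all $N$ systems, up to re-association of the nested Cartesian products.

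The step I expect to be the main obstacle is the bookkeeping that makes the induction legitimate: I need to check that a parallel connection of two learning systems is itself a learning system in the sense of Definition \ref{def:ls} — in particular that its data set can be taken as $D_1 \times D_2 \subset (\mathcal{X}_1 \times \mathcal{X}_2) \times (\mathcal{Y}_1 \times \mathcal{Y}_2)$ with a product algorithm and product hypothesis — and that the nested products $(\mathcal{X}_1 \times \mathcal{X}_2) \times \mathcal{X}_3$, etc., may be identified with the flat tuples $(\mathcal{X}_1, \mathcal{X}_2, \mathcal{X}_3)$ appearing in the multi-task definition. These identifications are the canonical associativity isomorphisms, so at worst the conclusion holds up to the structural homomorphism of Section \ref{sec:comp} rather than as a literal equality; I would therefore state the result at the input-output level, where $\Vert$ is defined, and remark that the component-set re-associations are the expected canonical maps.
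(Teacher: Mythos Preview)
Your proposal is correct and follows essentially the same route as the paper: both iterate the parallel-connection operator $\Vert$ over $S_1,\ldots,S_N$ and identify the result with the trivial multi-task system. The paper's proof is terser---it works only at the elementary input-output level, checks that the iterated $\Vert$ has type $(D_1,\ldots,D_N)\times(\mathcal{X}_1,\ldots,\mathcal{X}_N)\to(\mathcal{Y}_1,\ldots,\mathcal{Y}_N)$, and then invokes the definition of a trivial multi-task system directly---whereas you additionally unpack the componentwise factorization of the learning-system relation and flag the associativity bookkeeping; that extra care is fine but not something the paper pursues.
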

\begin{proof}
    Consider a set of $N$ learning systems $S_1$, ..., $S_N$. Let $S_{1-2} = S_1 \Vert S_2$. Let $S_{1-n} = S_{1-(n-1)} \Vert S_n$. Thus, $S_{1-N}$, at the elementary (input-output) level of abstraction is,
    \begin{gather*}
        S_{1-N} = (D_1 \times ... \times D_N) \times (\mathcal{X}_1 \times ... \times \mathcal{X}_N) \to (\mathcal{Y}_1 \times ... \times \mathcal{Y}_N),
    \end{gather*}
    which simplifies to,
    \begin{gather*}
        S_{1-N}: (D_1, ..., D_N) \times (\mathcal{X}_1, ..., \mathcal{X}_N) \to (\mathcal{Y}_1, ..., \mathcal{Y}_N).
    \end{gather*}
    Let $D = (D_1, ..., D_N)$, $\mathcal{X} = (\mathcal{X}_1, ..., \mathcal{X}_N)$, and $\mathcal{Y} = (\mathcal{Y}_1, ..., \mathcal{Y}_N)$. Let $A$, $\Theta$, and $H$ be defined similarly. By definition, the system 
    \begin{gather*}
    S \subset \times \{A, D, \Theta, H, \mathcal{X}, \mathcal{Y}\}
    \end{gather*}
    is a trivial multi-task learning system. $\square$
\end{proof}

When triviality does not hold, multi-task learning is a \emph{shallow} parallel connector in the sense that it is always a parallel connector in (elementary-level) terms of $D$, $\mathcal{X}$, and $\mathcal{Y}$, but not always a parallel connector in the (cascade-level) terms of the relation on them given by $A$, $\Theta$, and $H$ since $\exists A \neq (A_1, ..., A_N)$. 

In contrast to the parallel connections of multi-task learning, meta-learning is related to cascade connections and hierarchy. To see this, first define a cascade connection as follows.
\begin{definition} [Cascade Connections] \\
    Let $\circ: \overline{S} \times \overline{S} \to \overline{S}$ be such that $S_1 \circ S_2 = S_3$, where,
    \begin{gather*}
        S_1 \subset X_1 \times (Y_1 \times (Z_1)), S_2 \subset (X_2 \times Z_2) \times Y_2 \\
        S_3 \subset (X_1 \times X_2) \times (Y_1 \times Y_2), Z_1 = Z_2 = Z
    \end{gather*}
    and,
    \begin{gather*}
        ((x_1, x_2), (y_1, y_2)) \in S_3 \leftrightarrow \\
        (\exists z) ((x_1, (y_1, z)) \in S_1 \wedge ((x_2, z), y_2) \in S_2)
    \end{gather*}
    $\circ$ is termed the cascade (connecting) operator.
\end{definition}
\begin{theorem}
    Meta-learning systems are a cascade connection of a learning algorithm and a learning system.
\end{theorem}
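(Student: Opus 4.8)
The plan is to unfold both definitions — the meta-learning system and the cascade connection — and to exhibit the coupling variable explicitly. First I would recall from Definition~\ref{def:ls} that every learning system is itself a cascade connection of a learning algorithm $A : D \to \Theta$ and a hypothesis $H : \Theta \times \mathcal{X} \to \mathcal{Y}$, coupled through the parameter set $\Theta$. The task is then to view the \emph{outer} stratum of a meta-learning system in the same way.

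Next I would fix a meta-learning system $S$ over components $\{A_m, \Theta_m, D_m, H_m, \mathcal{X}_m, \mathcal{Y}_m\}$ together with its associated base learning system over $\{A, D, \Theta, H, \mathcal{X}, \mathcal{Y}\}$, using the identifications $\mathcal{X}_m = D$ and $\mathcal{Y}_m = \Theta$, so that $H_m : \Theta_m \times D \to \Theta$. The key observation is that for each fixed $\theta_m \in \Theta_m$ the map $H_m(\theta_m, \cdot) : D \to \Theta$ is a learning algorithm in the sense of Definition~\ref{def:ls}; hence the pair consisting of $H_m(\theta_m, \cdot)$ in the algorithm role and $H : \Theta \times \mathcal{X} \to \mathcal{Y}$ in the hypothesis role satisfies Definition~\ref{def:ls} and so is a learning system, which I will call $S'$. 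Meanwhile $A_m : D_m \to \Theta_m$ is structurally a learning algorithm.

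Then I would verify the defining biconditional of the cascade operator directly. Setting $S_1 = A_m$ with coupling variable $Z = \Theta_m$ (and trivial external output) and $S_2 = S'$ receiving $\theta_m$ as its coupling input, unpacking the relation defining $S$ through the existential over $\theta_m$ and then over $\theta$ yields
$$(\exists \theta_m)\big[(d_m, \theta_m) \in A_m \wedge (\exists \theta)[(\theta_m, d, \theta) \in H_m \wedge (\theta, x, y) \in H]\big],$$
which is exactly the condition defining $\circ$ with $z = \theta_m$. Hence $S = A_m \circ S'$, a cascade connection of a learning algorithm and a learning system.

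The main obstacle I anticipate is bookkeeping of the coupling variable's \emph{type}: inside a plain learning system the cascade couples through $\Theta$, whereas the outer coupling here is through $\Theta_m$, and one must justify that feeding $\theta_m$ into $S'$ is legitimate. It is, precisely because "$H_m$ is an algorithm" means $\theta_m$ selects a learning algorithm rather than acting as ordinary data or input. Making that selection explicit — either through the per-$\theta_m$ family argument above, or by absorbing $\Theta_m$ into the data component of $S'$ — is the one place the argument needs care; the remaining steps are definitional substitution, parallel to the proofs of the preceding theorems.
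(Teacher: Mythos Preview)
Your argument is correct, but it runs in the opposite direction from the paper's. You start from an arbitrary meta-learning system, identify the outer algorithm $A_m$ and the inner learning system $S'$ (with $H_m(\theta_m,\cdot)$ playing the algorithm role), and then verify the cascade biconditional with coupling variable $z=\theta_m$. The paper instead \emph{constructs}: it takes an abstract learning algorithm $A_2:D_2\to\Theta_2$ and a learning system $S_1$, forms $S_3=A_2\circ S_1$, observes that the induced algorithm now has signature $A_1:\Theta_2\times D_1\to\Theta_1$, and then restricts to the subsystem $S_m\subset S_3$ on $\{A',D',\Theta',A,D,\Theta\}$, reading off that $A_1,D_1,\Theta_1$ serve as the meta-level hypotheses, inputs, and outputs.

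What each buys: your decomposition proves the stronger reading of the theorem (\emph{every} meta-learning system admits a cascade factoring) and makes the coupling through $\Theta_m$ explicit, at the cost of the bookkeeping you flag about whether $S'$ is a single system or a $\theta_m$-indexed family. The paper's construction is shorter and mirrors the style of the preceding parallel-connection theorem, but strictly speaking only exhibits one cascade whose restriction is a meta-learning system, leaving the converse implicit. Your care about absorbing $\Theta_m$ into $S'$ is exactly the point the paper elides when it silently changes the signature of $A_1$ after cascading.
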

\begin{proof}
    Consider a learning algorithm $A_2: D_2 \to \Theta_2$ and a learning system $S_1$. Let $S_3 = A_2 \circ S_1$. Thus $S_3 = \subset \times \{A_2, D_2, \Theta_2, A_1, D_1, \Theta_1, H_1, \mathcal{X}_1, \mathcal{Y}_1\}$ where $A_1: \Theta_2 \times D_1 \to \Theta_1$. Let $S_m \subset S_3$ such that $S_m \subset \times \{A', D', \Theta', A, D, \Theta\}$. $S_m$ is a meta-learning system where $A_1$, $D_1$, and $\Theta_1$ are the hypotheses, inputs, and outputs of $S_m$. $\square$
\end{proof}


\subsection{Discussion}
\label{sec:disc}

The preceding provides the basic math needed to support the use of transfer, multi-task, and meta-learning as basic elements of modeling abstract learning systems. Because of the generality and homomorphism of these three concepts, a systems modeler clearly has many representations to choose from when modeling learning systems. Consider the learning system shown in Figure \ref{fig:big-learning-systems-fig}, which shows a series of homomorphisms on a learning system from Figure \ref{fig:big-learning-systems-fig}A to \ref{fig:big-learning-systems-fig}F.

Figure \ref{fig:big-learning-systems-fig}A shows two learning systems with meta-learning systems that transfer knowledge to each other. In Figure \ref{fig:big-learning-systems-fig}B, a parallel connection of $A_1$ and $A_2$ transforms the system in $\ref{fig:big-learning-systems-fig}$A into a multi-task learning system with a decomposed meta-learning system. In Figure \ref{fig:big-learning-systems-fig}C, the meta-learning system is recomposed into $A_m$ using a parallel connection. Figure \ref{fig:big-learning-systems-fig}D shows the hypotheses of the multi-task learning system are composed into $H$ by a parallel connection. Figure \ref{fig:big-learning-systems-fig}E shows the meta-learning hierarchy is collapsed into algorithm $A'$ using a series connection. And lastly, in Figure \ref{fig:big-learning-systems-fig}F, sets are redefined to recover a general learning system as in Definition \ref{def:ls}. All of these morphisms are homomorphisms---they are onto and thus structure-preserving.

\begin{figure*}[!tp]
    \centering
    \includegraphics[width=.99\textwidth]{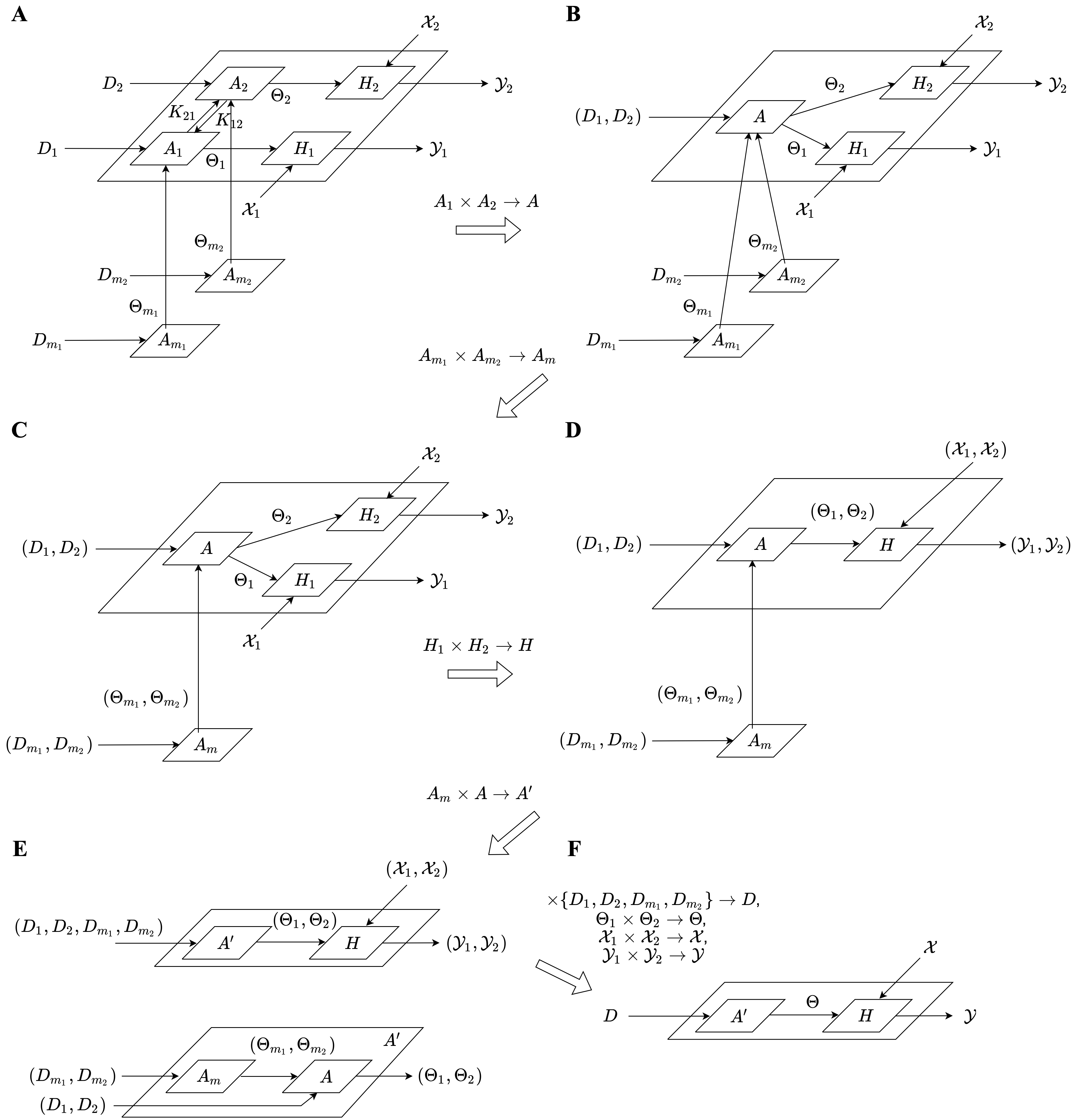}
    \caption{A series of homomorphisms from a two-way transfer learning system with meta-learning in (A) to a generic learning system in (F), as described in detail in Section 
    \ref{sec:disc}.}
    \label{fig:big-learning-systems-fig}
\end{figure*}

\section{Conclusion}


Appreciating the compositional and hierarchical relations between the three concepts of learning makes clear a simple point about learning systems. Transfer, multi-task, and meta-learning are basic compositions of learning systems that display a recursive self-similarity. A multi-task learning system can use transfer learning from other multi-task learning systems that all have meta-learning systems that use transfer learning, and so on. 

What is a multi-task transfer learning system? What is a transfer multi-task learning system? What is a meta-transfer multi-task learning system? What is a meta-transfer multi-task transfer learning system? What is ... etc.? Clearly what is a useful taxonomy for organizing machine learning solution methods in the literature becomes burdensome and tedious when applied to systems modeling. Using the presented material, modelers have simple, general formalism for describing the compositions of learning systems possible by way of transfer, multi-task, and meta-learning. 

\bibliographystyle{splncs04}
\bibliography{ref}

\end{document}